\documentclass{article} % For LaTeX2e
\usepackage{nips12submit_e,times}
%\documentstyle[nips12submit_09,times,art10]{article} % For LaTeX 2.09

% For figures
\usepackage{graphicx} % more modern
\usepackage{subfigure}
\usepackage{algorithm}
\usepackage{algorithmic}
\usepackage{amssymb,amsthm,amsmath}

\newcommand{\beq}{\vspace{0mm}\begin{equation}}
\newcommand{\eeq}{\vspace{0mm}\end{equation}}
\newcommand{\beqs}{\vspace{0mm}\begin{eqnarray}}
\newcommand{\eeqs}{\vspace{0mm}\end{eqnarray}}
\newcommand{\barr}{\begin{array}}
\newcommand{\earr}{\end{array}}

\newcommand{\rv}{\boldsymbol{r}}

\newcommand{\xv}{\boldsymbol{x}}

\newcommand{\lambdav}[0]{{\boldsymbol{\lambda}}}

\newcommand{\E}{\mathbb{E}}

\newtheorem{thm}{Theorem}[section]

\newtheorem{lem}[thm]{Lemma}

\title{Augment-and-Conquer Negative Binomial Processes}

\author{
%Mingyuan Zhou \ \ \ \ \ \  Lawrence Carin\\
%Department of Electrical and Computer Engineering\\
%Duke University, Durham, NC 27708 \\
%\texttt{\{mz1,lcarin\}@ee.duke.edu} \\
Mingyuan Zhou\\
Dept. of Electrical and Computer Engineering\\
Duke University, Durham, NC 27708 \\
\texttt{mz1@ee.duke.edu} \\
\And
Lawrence Carin \\
Dept. of Electrical and Computer Engineering\\
Duke University, Durham, NC 27708 \\
\texttt{lcarin@ee.duke.edu} \\
}

% The \author macro works with any number of authors. There are two commands
% used to separate the names and addresses of multiple authors: \And and \AND.
%
% Using \And between authors leaves it to \LaTeX{} to determine where to break
% the lines. Using \AND forces a linebreak at that point. So, if \LaTeX{}
% puts 3 of 4 authors names on the first line, and the last on the second
% line, try using \AND instead of \And before the third author name.

\nipsfinalcopy % Uncomment for camera-ready version

\begin{document}

\maketitle

\begin{abstract}
By developing data augmentation methods unique to the negative binomial (NB) distribution, we unite seemingly disjoint count and mixture models  under the  NB process framework. We develop fundamental properties of the models and derive efficient Gibbs sampling inference. We show that the  gamma-NB process can be reduced to the hierarchical Dirichlet process with normalization, highlighting its unique theoretical, structural and computational advantages.
A variety of NB processes with distinct sharing mechanisms are constructed and applied to topic modeling, with connections to existing algorithms, showing the importance of inferring both the NB dispersion and probability parameters.
\end{abstract}

\section{Introduction}

There has been increasing interest in count modeling using the Poisson process, geometric process %\cite{Wolpert98poisson/gammarandom,
\cite{PoissonP,InfGaP, Thibaux,Miller} and recently the negative binomial (NB) process  \cite{BNBP_PFA,NBPJordan}. Notably, it has been independently shown in \cite{BNBP_PFA} and \cite{NBPJordan}  that the NB process, originally constructed for count analysis, can be naturally applied for mixture modeling of \emph{grouped} data $\xv_1,\cdots,\xv_J$, where each group $\xv_j=\{x_{ji}\}_{i=1,N_j}$.
For a territory long occupied by the hierarchical Dirichlet process (HDP) \cite{HDP} and related models,  the inference of which may require substantial bookkeeping and suffer from slow convergence \cite{HDP}, the discovery of the NB process for mixture modeling can be significant. As the seemingly distinct problems of count and mixture modeling are united under the NB process framework, new opportunities emerge for better data fitting, more efficient inference and more flexible model constructions. However, neither \cite{BNBP_PFA} nor  \cite{NBPJordan} explore the properties of
the NB distribution deep enough to achieve fully tractable closed-form inference.
Of particular concern is the NB dispersion parameter, which was simply fixed or empirically set~\cite{NBPJordan}, or inferred with a Metropolis-Hastings algorithm~\cite{BNBP_PFA}. Under these limitations, both papers fail to  reveal the connections of the NB process to the
HDP,
and thus may lead to false assessments on comparing their modeling abilities.

We perform joint count and mixture modeling under the NB process framework, using completely random measures \cite{PoissonP,JordanCRM,Wolp:Clyd:Tu:2011} that are simple to construct and amenable for posterior computation.
We propose to augment-and-conquer the NB process:  by ``augmenting'' a NB process into both the gamma-Poisson and compound Poisson representations, we ``conquer'' the unification of count and mixture modeling, the analysis of fundamental model properties, and the derivation of efficient  Gibbs sampling inference.
We make two additional contributions:  1) we construct a gamma-NB process, analyze its properties and show how its normalization leads to the HDP, highlighting its unique theoretical, structural and computational advantages  relative to the HDP.
2) We show that a variety of NB processes can be constructed with distinct model properties, for which the shared random measure can be selected from completely random measures such as the gamma, beta, and beta-Bernoulli processes; we compare their performance on topic modeling, a typical example for mixture modeling of grouped data, and show the importance of inferring both the NB dispersion and probability parameters, which respectively govern the overdispersion level and the variance-to-mean ratio in count modeling.

\subsection{Poisson process for count and mixture modeling}\label{sec:PP}
\vspace{-2.5mm}
Before introducing the NB process, we first illustrate how the seemingly distinct problems of count and mixture modeling can be united under the Poisson process.
Denote $\Omega$ as a measure space and for each Borel set $A\subset \Omega$, denote $X_j(A)$ as a count random variable describing the number of observations in $\xv_j$ that reside within $A$. Given grouped data $\xv_1,\cdots,\xv_J$, for any measurable disjoint partition  $A_1,\cdots,A_Q$ of $\Omega$, we aim to jointly model the count random variables $\{X_j(A_q)\}$.
A natural choice would be to define a Poisson process $X_j\sim\mbox{PP}(G)$, with a shared completely random measure $G$ on $\Omega$, such that
$
X_j(A)\sim\mbox{Pois}\big(G(A)\big)
$ for each $A\subset\Omega$. Denote $G(\Omega) = \sum_{q=1}^Q G(A_q)$ and $\widetilde{G} = G/G(\Omega)$. % as the normalized mean measure,
 Following Lemma 4.1 of \cite{BNBP_PFA}, the joint distributions of $X_j(\Omega),X_j(A_1),\cdots,X_j(A_Q)$ are equivalent under the following two expressions:
\vspace{-1.2mm}\beqs\small  \label{eq:Pois}& X_j(\Omega) = \sum_{q=1}^Q X_j(A_q),~~~~~~X_j(A_q)\sim\mbox{Pois}\big(G(A_q)\big);\\
% can be equivalently represented as
 \label{eq:Mult}\small
&X_j(\Omega) \sim \mbox{Poisson}(G(\Omega)),~~~[X_j(A_1),\cdots,X_j(A_q)]\sim \mbox{Mult}\big(X_j(\Omega); \widetilde{G}(A_1),\cdots,\widetilde{G}(A_Q)\big).\vspace{-2.8mm}
\eeqs
 Thus the Poisson process provides not only  a way to generate independent counts from each $A_q$, but also a mechanism for mixture modeling, which allocates the observations into any measurable disjoint partition $\{A_q\}_{1,Q}$ of $\Omega$, conditioning on
 $X_j(\Omega)$ and the normalized mean measure $\widetilde{G}$.

 To complete the model, we may place a gamma process \cite{Wolp:Clyd:Tu:2011} prior on the shared  measure as $G\hspace{-1.2mm}\sim\hspace{-1.2mm}\mbox{GaP}(c,G_0)$, with concentration parameter $c$ and base measure $G_0$,
 such that $G(A)\hspace{-1.2mm}\sim\hspace{-1.2mm}\mbox{Gamma}(G_0(A),1/c)$ % with ${\text{Var}[G(A)]}/{\E[G(A)]} = 1/c$
 for each $A\hspace{-1.0mm}\subset\hspace{-1.0mm} \Omega$, where $G_0$ can be continuous, discrete or a combination of both. Note that $\widetilde{G} = G/G(\Omega)$ now becomes a Dirichlet process (DP) as $\widetilde{G}\hspace{-0.8mm} \sim \hspace{-0.8mm}\mbox{DP}(\gamma_0,\widetilde{G}_0)$, where $\gamma_0\hspace{-0.8mm}=\hspace{-0.8mm}G_0(\Omega)$ and $\widetilde{G}_0 \hspace{-0.8mm}=\hspace{-0.8mm} G_0/\gamma_0$. The normalized gamma
 representation of the DP is discussed in \cite{ferguson73,ishwaran02,Wolp:Clyd:Tu:2011} and has been used to construct the group-level DPs for an HDP % that can incorporate correlation structures
  \cite{DILN}.
The Poisson process\linebreak has an equal-dispersion assumption for count modeling. %that $\E[X_j(A)|G] = \mbox{Var}[X_j(A)|G] = G(A)$
 As shown in (\ref{eq:Mult}), the construction of Poisson processes with a shared gamma process mean measure implies the same mixture proportions across groups, which is essentially the same as the DP when used for mixture modeling when the total counts $\{X_j(\Omega)\}_j$ are not treated as random variables. %,
 This motivates us to consider adding an additional layer
 or using a different distribution other than the Poisson to model the counts. As shown below, the NB distribution is an ideal candidate, not only because it allows overdispersion, %(variance greater than the mean),
 but also because it can be augmented into both a gamma-Poisson and a compound Poisson representations.

\vspace{-3mm}
\section{Augment-and-Conquer the Negative Binomial Distribution}
\vspace{-2.5mm}

The NB distribution $m\sim\mbox{NB}(r,p)$ has the probability mass function (PMF) $f_M(m) = \frac{\Gamma(r+m)}{m!\Gamma(r)}(1-p)^r p^m$. %, where $\Gamma(\cdot)$ denotes the gamma function.
It  has a mean $\mu = {rp}/(1-p)$ smaller than the variance $\sigma^2 = {rp}/{(1-p)^2} = \mu + r^{-1}\mu^2$, with the variance-to-mean ratio (VMR) as $(1-p)^{-1}$ and the overdispersion level (ODL, the coefficient of the quadratic term in $\sigma^2$) as $r^{-1}$. It has been widely investigated and applied to numerous scientific studies  \cite{NB_Fitting_53,Cameron1998,WinkelmannCount}.
  The NB distribution can be augmented into a gamma-Poisson construction %\cite{Yule} as
  as
$m\sim \mbox{Pois}(\lambda),~\lambda\sim \mbox{Gamma}\left(r,{p}/{(1-p)}\right)$, where the gamma distribution is parameterized by its shape $r$ and scale $p/(1-p)$.
It can also be augmented under a compound Poisson representation \cite{LogPoisNB} as
$ %\begin{align}\label{eq:HGP-PP}
m = \sum_{{t}=1}^{l}u_{{t}},~ u_{{t}}\sim\mbox{Log}(p), ~ l \sim \mbox{Pois}(-r\ln(1-p))
$, %\end{align}
where $u\sim \mbox{Log}(p)$ is the logarithmic distribution~\cite{johnson2005univariate} with %PDF
probability-generating function (PGF)
$ C_U(z) = {\ln(1-pz)}/{\ln(1-p)},~~|z|<{p^{-1}}.
$ In a slight abuse of notation, but for added conciseness, in the following discussion
we use $m\sim \sum_{t=1}^l \mbox{Log}(p)$ to denote $m = \sum_{{t}=1}^{l}u_{{t}},~ u_{{t}}\sim\mbox{Log}(p)$.

 The inference of the NB dispersion parameter $r$ %, whose conjugate prior is unknown,
 has long been a challenge %, especially under small sample size
 \cite{NB_Fitting_53,NB_MQL,%NBbayesian,
 NB_Bio_2008}. In this paper, we first place a gamma prior on  it as $r\sim\mbox{Gamma}(r_1,1/c_1)$. We then %propose an augment-and-conquer method shown in
 use Lemma \ref{lem:divide} (below) to infer a latent count $l$ for each $m\sim\mbox{NB}(r,p)$ conditioning on $m$ and $r$. Since $l\sim \mbox{Pois}(-r\ln(1-p))$ by construction, we can use the gamma Poisson conjugacy to update $r$. Using Lemma \ref{lem:merge} (below), we can further infer an augmented latent count  $l'$ for each $l$, % conditioning on $r_1$,
 and then use these latent counts %and $p'= \frac{-\ln(1-p)}{c_1-\ln(1-p)}$
 to update $r_1$, assuming $r_1\sim\mbox{Gamma}(r_2,1/c_2)$. Using  Lemmas \ref{lem:divide} and  \ref{lem:merge}, we can continue this process repeatedly,
 suggesting that we may build a NB process to
 model data that have subgroups within groups. The conditional posterior of the latent count $l$
 was first derived by us  but was not given an analytical form \cite{LGNB}. Below we explicitly derive the PMF of $l$, shown in (\ref{eq:CRT}), and find that it exactly represents the distribution of the random number of  tables occupied by $m$ customers  in a Chinese restaurant process with concentration parameter $r$ \cite{DP_Mixture_Antoniak,Escobar1995,HDP}. We denote $l\sim\mbox{CRT}(m,r)$ as a Chinese restaurant  table (CRT) count random variable with such a PMF and as proved  in the supplementary material, we can sample it as $l=\sum_{n=1}^m b_n,~b_n\sim\mbox{Bernoulli}\left({r}/(n-1+r)\right)$.

 Both the gamma-Poisson and compound-Poisson augmentations of the NB distribution and Lemmas \ref{lem:divide} and \ref{lem:merge} are key ingredients of this paper. We will show that these augment-and-concur methods not only unite  count and mixture modeling and provide efficient inference, but also, as shown in Section~\ref{sec:GNBP}, let us examine the posteriors to understand fundamental properties of the NB processes,  clearly revealing connections to previous nonparametric Bayesian mixture models.

\begin{lem}\label{lem:divide} Denote $s(m,j)$ as Stirling numbers of the first kind \cite{johnson2005univariate}.
Augment $m\sim\emph{\mbox{NB}}(r,p)$ under the compound Poisson representation as $m \sim\sum_{{t}=1}^{l} \emph{\mbox{Log}}(p),~ l\sim\emph{\mbox{Pois}}(-r\ln(1-p))$, then the conditional posterior of $l$ has PMF
\beqs\label{eq:CRT}
&\emph{\mbox{Pr}}(l = j| m,r) =  {\frac{\Gamma(r)}{\Gamma(m+r)}}|s(m,j)| r^j, ~~ j=0,1,\cdots,m.
\eeqs
\end{lem}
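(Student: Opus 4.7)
The plan is to apply Bayes' theorem directly: the joint distribution of $(m,l)$ is given by construction (Poisson prior on $l$, convolution of $\mathrm{Log}(p)$'s as conditional likelihood), and the marginal of $m$ is the NB PMF we started with. So I will compute
\[
\Pr(l=j\mid m,r)\;=\;\frac{\Pr(m\mid l=j,p)\,\Pr(l=j\mid r,p)}{\Pr(m\mid r,p)}
\]
and verify that the result collapses to $\frac{\Gamma(r)}{\Gamma(m+r)}|s(m,j)|r^j$.

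The first and main computational step is to get a closed form for $\Pr(m\mid l=j,p)$, i.e.\ for the $j$-fold convolution of $\mathrm{Log}(p)$. Here I use the PGF given in the text, $C_U(z)=\ln(1-pz)/\ln(1-p)$, so the sum has PGF $[\ln(1-pz)/\ln(1-p)]^j=[-\ln(1-pz)]^j/[-\ln(1-p)]^j$ (the signs cancel since $j$ appears in both numerator and denominator). The key classical identity I will invoke is the exponential generating function for unsigned Stirling numbers of the first kind,
\[
\frac{[-\ln(1-x)]^j}{j!}\;=\;\sum_{m\ge j}|s(m,j)|\,\frac{x^m}{m!},
\]
from which reading off the coefficient of $z^m$ gives
\[
\Pr(m\mid l=j,p)\;=\;\frac{j!\,|s(m,j)|\,p^m}{m!\,[-\ln(1-p)]^j}.
\]

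The rest is bookkeeping. The Poisson prior contributes $\Pr(l=j\mid r,p)=\frac{[-r\ln(1-p)]^j}{j!}(1-p)^r$, using $e^{r\ln(1-p)}=(1-p)^r$. Multiplying these two gives a numerator of $\frac{|s(m,j)|\,r^j\,p^m\,(1-p)^r}{m!}$, in which the awkward $[-\ln(1-p)]^j$ factor has cancelled; the only dependence on $j$ left is through $|s(m,j)|r^j$. Dividing by the NB PMF $\Pr(m\mid r,p)=\frac{\Gamma(r+m)}{m!\,\Gamma(r)}(1-p)^r p^m$ collapses the $p^m$, $(1-p)^r$, and $1/m!$ factors, leaving exactly $\frac{\Gamma(r)}{\Gamma(m+r)}|s(m,j)|r^j$, as claimed. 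The support $j\in\{0,1,\dots,m\}$ comes automatically from $|s(m,j)|=0$ for $j>m$ and $|s(m,0)|=\delta_{m,0}$.

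I expect the only nontrivial step to be recognising the Stirling EGF above and getting the signs straight; once that identity is in hand, the computation is an almost mechanical cancellation. As a sanity check I would verify that $\sum_{j=0}^{m}\Pr(l=j\mid m,r)=1$, which amounts to the well-known identity $\sum_{j=0}^m |s(m,j)|r^j=r(r+1)\cdots(r+m-1)=\Gamma(r+m)/\Gamma(r)$; this also yields the Bernoulli-sum sampler $l=\sum_{n=1}^{m}b_n$ with $b_n\sim\mathrm{Bernoulli}(r/(n-1+r))$ advertised in the text, by the standard recursion $|s(m,j)|=|s(m-1,j-1)|+(m-1)|s(m-1,j)|$ applied to the CRT seating probabilities.
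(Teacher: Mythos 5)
Your proposal is correct and takes essentially the same route as the paper's proof: both compute $\Pr(m\mid l=j,p)$ for the $j$-fold $\mbox{Log}(p)$ convolution from its PGF via the Stirling-number generating function $[-\ln(1-x)]^j = j!\sum_{m\ge j}|s(m,j)|\,x^m/m!$, multiply by the $\mbox{Pois}(-r\ln(1-p))$ prior, and observe that the $j$-dependence collapses to $|s(m,j)|\,r^j$. The only minor difference is in normalization: you divide by the known NB marginal, which is legitimate since the compound-Poisson representation of $\mbox{NB}(r,p)$ is part of the lemma's hypothesis, whereas the paper works up to proportionality in $j$ and evaluates the normalizing constant $\sum_{j=0}^m|s(m,j)|\,r^j=\Gamma(m+r)/\Gamma(r)$ directly via the Stirling recursion.
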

\vspace{-4.0mm}
\begin{proof}
Denote $w_{j} \sim \sum_{{t}=1}^j \mbox{Log}(p)$, $j=1,\cdots,m$. Since $w_{j}$ is the summation of $j$ iid  $\mbox{Log}(p)$ random variables,
the PGF of $w_{j}$ becomes
$
C_{W_{j}}(z)=C_{U}^j(z)=
\left[{\ln(1-pz)}/{\ln(1-p)}\right]^j,~ |z|<{p^{-1}}
$. Using   the property that $[\ln(1+x)]^j = j!\sum_{n=j}^\infty\frac{s(n,j)x^n}{n!}$ \cite{johnson2005univariate}, we have $ %\beq\notag
\mbox{Pr}(w_{j}=m) = {C_{W_{j}}^{(m)}(0)}/{m!} = (-1)^m p^j j! s(m,j)/(m![\ln(1-p)]^j). $ %\eeq
Thus  for $0\le j \le m$, we have
$ %\begin{align}
\mbox{Pr}(L = j|m,r) \propto  \mbox{Pr}(w_{j}=m) \mbox{Pois}(j;-r\ln(1-p)) \propto |s(m,j)| r^j.
$ %\end{align}
Denote $S_r(m) = \sum_{j=0}^m |s(m,j)| r^j$, we have $S_r(m) = (m-1+r)S_r(m-1)=\cdots=\prod_{n=1}^{m-1} (r+n) S_r(1)  = \prod_{n=0}^{m-1} (r+n)=\frac{\Gamma(m+r)}{\Gamma(r)}$.
\end{proof}
\vspace{-3mm}
\begin{lem}\label{lem:merge}  Let $m\sim\emph{\mbox{NB}}(r,p),~r\sim\emph{\mbox{Gamma}}(r_1,1/c_1)$, denote $p' = \frac{-\ln(1-p)}{c_1-\ln(1-p)}$, then $m$ can also be generated from a compound distribution as %named as the double compound Poisson distribution as
\vspace{-0.8mm}\beqs\label{eq:merge}
&m\sim \sum_{{t}=1}^{l} \emph{\mbox{Log}}(p),~
l \sim \sum_{{t}'=1}^{l'}\emph{\mbox{Log}}(p'), ~ l' \sim\emph{ \mbox{Pois}}(-r_1\ln(1-p')) .
\eeqs
\end{lem}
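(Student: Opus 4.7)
The plan is to chain together two applications of the compound Poisson/NB augmentation of Lemma \ref{lem:divide}, marginalizing over the gamma-distributed $r$ in between. Conceptually, the first NB layer already comes equipped with its logarithmic-Poisson representation from Lemma \ref{lem:divide}; the point of the lemma is to show that once $r$ itself is made random, the latent Poisson count $l$ inherits a second NB distribution and therefore admits its own compound Poisson expansion.

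First, I would invoke Lemma \ref{lem:divide} directly to write $m \sim \sum_{t=1}^{l} \mbox{Log}(p)$ with $l \sim \mbox{Pois}(-r\ln(1-p))$, conditional on $r$. Then I would marginalize out $r$: since $r \sim \mbox{Gamma}(r_1,1/c_1)$ and $-\ln(1-p)$ is a positive constant, the rate $\lambda := -r\ln(1-p)$ is $\mbox{Gamma}(r_1,-\ln(1-p)/c_1)$-distributed. A standard gamma-Poisson mixture then yields $l \sim \mbox{NB}(r_1,p')$, where the success probability is read off from the scale-to-scale-plus-one formula
\[
p' \;=\; \frac{-\ln(1-p)/c_1}{1 + (-\ln(1-p)/c_1)} \;=\; \frac{-\ln(1-p)}{c_1-\ln(1-p)},
\]
matching the definition in the statement.

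Second, I would apply Lemma \ref{lem:divide} a second time, now to the NB variable $l \sim \mbox{NB}(r_1,p')$, to obtain the compound Poisson representation $l \sim \sum_{t'=1}^{l'} \mbox{Log}(p')$ with $l' \sim \mbox{Pois}(-r_1\ln(1-p'))$. Substituting back into the first display gives the three-level hierarchy \eqref{eq:merge}. To verify consistency it would suffice to check that the marginal of $m$ under \eqref{eq:merge} agrees with the original $\mbox{NB}(r,p)$ mixed over $r$; this can be done at the level of PGFs, since the PGF of a compound distribution is just the composition of the PGFs, and the gamma-Poisson mixture PGF reproduces the NB PGF.

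The only real subtlety is algebraic: making sure the parameterization of $\mbox{Gamma}(r_1,1/c_1)$ (shape/scale) matches the scale $-\ln(1-p)/c_1$ after the deterministic rescaling, and then applying the correct scale-to-probability conversion to land on the stated $p'$. Everything else is bookkeeping and a direct reuse of Lemma \ref{lem:divide}, so I expect no deeper obstacle beyond this parameterization check.
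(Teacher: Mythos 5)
Your proposal is correct and follows essentially the same route as the paper's proof: augment $m\sim\mbox{NB}(r,p)$ into its compound Poisson representation, marginalize the gamma-distributed $r$ so that $l\sim\mbox{NB}(r_1,p')$, then augment $l$ again. The paper states the marginalization step without detail, whereas you spell out the gamma-Poisson mixture and the scale-to-probability conversion giving $p'$; this is just filling in the same argument, not a different one.
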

\vspace{-5.3mm}
\begin{proof}
Augmenting $m$ leads to
$
m \sim \sum_{{t}=1}^{l}  {\mbox{Log}}(p),~ l\sim{\mbox{Pois}}(-r\ln(1-p)) %, ~r\sim\mbox{Gamma}(r_1,1/c_1)
$.
Marginalizing out $r$ leads to $l\sim{\mbox{NB}}\left(r_1,p'\right)$. Augmenting $l$ using its compound Poisson representation leads to (\ref{eq:merge}).
\vspace{-2mm}
\end{proof}

\vspace{-3mm}
\section{Gamma-Negative Binomial Process}\label{sec:GNBP}
\vspace{-2mm}
We explore sharing the NB dispersion across groups while the probability parameters are group dependent.
We define a NB process  $X\sim\mbox{NBP}(G,p)$ as $X(A)\sim \mbox{NB}(G(A),p)$ for each $A\subset \Omega$
and construct a gamma-NB process for joint count and mixture modeling as
$ %\beqs\small
X_j \sim \mbox{NBP}(G,p_j),~ G\sim \mbox{GaP}(c, G_0)
$,
which can be augmented as a gamma-gamma-Poisson process as
\beqs\label{eq:HGP-PP}\small
&X_j \sim \mbox{PP}(\Lambda_j),~ ~\Lambda_j \sim \mbox{GaP}((1-p_j)/p_j, G), ~~G\sim \mbox{GaP}(c, G_0).
\eeqs
In the above PP($\cdot$) and GaP($\cdot$) represent the Poisson and gamma processes, respectively, %   and GaP($\cdot$) denotes a gamma process,
as defined in Section \ref{sec:PP}. Using Lemma~\ref{lem:merge},   the gamma-NB process can also be augmented as
\vspace{-1mm}\beqs\small
&\label{eq:Aug2} X_j \sim \sum_{{t}=1}^{L_j}\mbox{Log}(p_j), ~L_j \sim \mbox{PP}(-G\ln(1-p_j)),~G\sim \mbox{GaP}(c, G_0);\\
&\label{eq:Laugment} L = \sum_j L_j \sim\sum_{{t}=1}^{L'}\mbox{Log}(p'), ~ L' \sim \mbox{PP}(-G_0\ln(1-p')),~~p' = \frac{-\sum_j\ln(1-p_j)}{c-\sum_j\ln(1-p_j)}.\vspace{-1mm}
\eeqs
These three augmentations allow us to derive a sequence of  closed-form update equations for inference %fully analytical conditional posteriors
with the gamma-NB process.
 Using the  gamma Poisson conjugacy on (\ref{eq:HGP-PP}), for each $A\subset\Omega$, we have
$
\Lambda_j(A)|G,X_j,p_j\sim \mbox{Gamma}\left(G(A)+ X_j(A),p_j\right)
$, thus the conditional posterior of %the group-level gamma random measure
$\Lambda_j$ is %conditioning on  $\{X_{ji}\}_{i=1,N_j}$ and $G$ is
\vspace{-0.5mm}\beq\label{eq:post_start}
\Lambda_j|G,X_j,p_j \sim {\mbox{GaP}}\big(1/p_j,G+X_j\big).\vspace{-.5mm}
\eeq
Define $T\sim \mbox{CRTP}(X,G)$ as a CRT process that $T(A) = \sum_{\omega\in A}T(\omega),~T(\omega) \sim\mbox{CRT}(X(\omega),G(\omega))$ for each $A\subset\Omega$. Applying Lemma \ref{lem:divide} on (\ref{eq:Aug2}) and (\ref{eq:Laugment}),
we have
\beqs\small
&L_j|X_j,G \sim \mbox{CRTP}(X_j,G),~~~~L'|L,G_0 \sim \mbox{CRTP}(L,G_0).\vspace{-0.8mm}
\eeqs
If $G_0$ is a continuous base measure and $\gamma_0=G_0(\Omega)$ is finite, we have $G_0(\omega)\hspace{-1mm}\rightarrow \hspace{-0.5mm}0$ $\forall~\omega\in \Omega$  and thus
\beqs
&L'(\Omega)|L,G_0 = \sum_{\omega\in\Omega}\delta(L(\omega)>0)= \sum_{\omega\in\Omega}\delta( \sum_{j}X_j(\omega)>0)
\eeqs
which is equal to $K^{+}$, the total number of used discrete atoms; if $G_0$ is discrete as $G_0=\sum_{k=1}^K\frac{\gamma_0}{K}\delta_{\omega_k}$,
then $L'(\omega_k)=\mbox{CRT}(L(\omega_k),\frac{\gamma_0}{K})\ge 1$ if $\sum_{j}X_j(\omega_k)>0$, thus $L'(\Omega)\ge K^{+}$. In either case,
let $\gamma_0\sim{\mbox{Gamma}}(e_0,1/f_0)$,
with  the gamma Poisson conjugacy on (\ref{eq:Aug2}) and (\ref{eq:Laugment}), %using the gamma and Poisson conjugacy,
we have
\vspace{-0.0mm}\beqs \label{eq:gamma0post}
&\gamma_0|\{L'(\Omega),p'\} \sim {\mbox{Gamma}}\big( e_0 +  L'(\Omega),\frac{1}{f_0- \ln(1-p')}\big);\\
\label{eq:Gpost}
&G|G_0,\{L_j,p_j\} \sim {\mbox{GaP}} \big(c - \sum_{j} \ln( 1 - p_j), G_0 +  \sum_{j}  L_{j} \big).\vspace{-.5mm}
\eeqs
 Since the data $\{x_{ji}\}_i$ are exchangeable within group $j$, the predictive distribution of a point $X_{ji}$, conditioning on $X_j^{-i}=\{X_{jn}\}_{n: n\neq i}$ and $G$, with $\Lambda_j$ marginalized out, can be expressed as
\vspace{-1mm}\beqs\small
&X_{ji}|G,X_j^{-i}\sim \frac{\E[\Lambda_j|G,X_j^{-i}]}{\E[\Lambda_j(\Omega)|G,X_j^{-i}]} = \frac{G}{G(\Omega)+X_j(\Omega)-1}+\frac{X_j^{-i}}{G(\Omega)+X_j(\Omega)-1}.
\eeqs%\vspace{-5mm}
\vspace{-2mm}
\subsection{Relationship with the hierarchical Dirichlet process}
\vspace{-2mm}
Using the equivalence between (\ref{eq:Pois}) and (\ref{eq:Mult})  and normalizing all the gamma processes in (\ref{eq:HGP-PP}),  denoting $\widetilde{\Lambda}_j= {\Lambda_j}/\Lambda_j(\Omega)$, $\alpha = G(\Omega)$, $ \widetilde{G} = G/\alpha$, $\gamma_0=G_0(\Omega)$ and $\widetilde{G}_0 = G_0/\gamma_0$, % the relationship between the Gamma and Dirichlet processes,
we can re-express (\ref{eq:HGP-PP}) as
\beqs\label{eq:HDP}
&X_{ji} \sim \widetilde{\Lambda}_j,~\widetilde{\Lambda}_j \sim \mbox{DP}(\alpha,  \widetilde{G}),~\alpha \sim\mbox{Gamma}(\gamma_0,1/c),~\widetilde{G}\sim \mbox{DP}(\gamma_0, \widetilde{G}_0) %, ~\gamma_0 \sim \mbox{Gamma}(e_0,1/f_0)
\eeqs
 which is an HDP \cite{HDP}.
Thus the normalized gamma-NB process leads to an HDP, yet we cannot return from the HDP to the gamma-NB process without modeling $X_j(\Omega)$ and $\Lambda_j(\Omega)$ as random variables.
Theoretically, they are distinct in that the gamma-NB process is a completely random measure, assigning independent random variables into any disjoint Borel sets $\{A_q\}_{1,Q}$ of $\Omega$;
whereas the HDP
is not.
Practically, the gamma-NB process can exploit conjugacy to achieve analytical conditional posteriors for all latent parameters.
The inference of the HDP is a major challenge and it is usually solved through alternative constructions such as the Chinese restaurant franchise (CRF) and stick-breaking representations \cite{HDP,VBHDP}.
In particular, without analytical conditional posteriors, the inference of concentration parameters $\alpha$ and $\gamma_0$ is nontrivial \cite{HDP,HDP-HMM} and they are often simply fixed  \cite{VBHDP}. Under the CRF metaphor $\alpha$ governs the random number of tables occupied by customers in each restaurant independently; further, if the base probability measure $\widetilde{G}_0$ is continuous, $\gamma_0$ governs the random number of dishes selected by tables of all restaurants. One may apply the data augmentation method of \cite{Escobar1995} to sample $\alpha$ and $\gamma_0$.\linebreak However, if $\widetilde{G}_0$ is discrete as $\widetilde{G}_0=\sum_{k=1}^K \frac{1}{K}\delta_{\omega_k}$, which is of practical value and becomes a continuous base measure as $K\rightarrow \infty$ \cite{ishwaran02,HDP,HDP-HMM}, then using the method of \cite{Escobar1995} to sample $\gamma_0$ is only approximately correct, which may result in a biased estimate in practice, especially if $K$ is not large enough.
By contrast, in the gamma-NB process, the shared gamma process $G$ can be analytically updated with (\ref{eq:Gpost})
and $G(\Omega)$ plays the role of $\alpha$ in the HDP, which is readily available as
\vspace{-1.0mm}\beqs \label{eq:alphaPost} \small
&G(\Omega)|G_0, \{L_j,p_j\}_{j=1,N}
\sim \mbox{Gamma}  {\Big( \gamma_0 +  \sum_j  L_{j}(\Omega), \frac{1}{c - \sum_j \ln( 1 - p_j)}\Big)}\vspace{-1.0mm}
\eeqs
and as in (\ref{eq:gamma0post}), regardless of whether the base measure is continuous, the total mass $\gamma_0$ has an analytical gamma posterior whose shape parameter is governed by  $L'(\Omega)$, with $L'(\Omega)=K^{+}$ if $G_0$ is continuous and finite and $L'(\Omega)\ge K^{+}$ if %$G_0$ is discrete as
$G_0=\sum_{k=1}^K\frac{\gamma_0}{K}\delta_{\omega_k}$.
Equation (\ref{eq:alphaPost}) also intuitively shows how the NB probability parameters $\{p_j\}$ govern the variations among $\{\widetilde{\Lambda}_j\}$ in the gamma-NB process.
In the HDP, $p_j$ is not explicitly modeled, and since its value becomes irrelevant when taking the normalized constructions in (\ref{eq:HDP}), it is usually  treated as a nuisance parameter and perceived as $p_j=0.5$ \linebreak when needed for interpretation purpose. Fixing $p_j=0.5$ is also considered in \cite{DILN} to construct an HDP, whose group-level DPs are normalized from gamma processes with the scale parameters as $\frac{p_j}{1-p_j}=1$; it is also shown in \cite{DILN} that improved performance can be obtained for topic modeling by learning the scale parameters with a log Gaussian process prior. However, no analytical conditional posteriors are provided and Gibbs sampling is not considered as a viable option \cite{DILN}.

\vspace{-3mm}
\subsection{Augment-and-conquer inference for joint count and mixture modeling}
\vspace{-2mm}
For a finite continuous base measure, the gamma process $G\sim\mbox{GaP}(c,G_0)$ can also be defined with its L\'{e}vy measure on a product space $\mathbb{R}_{+}\times \Omega$, expressed as
$ %\beq
\nu(dr d\omega) = r^{-1} e^{-cr} dr G_0(d\omega)
$  \cite{Wolp:Clyd:Tu:2011}. %\eeq
Since the Poisson intensity $\nu^{+}=\nu(\mathbb{R}_{+}\times\Omega)=\infty$ and $\int\int_{\mathbb{R}_{+}\times\Omega} r \nu(dr d\omega)$ is finite,  %there are a countably infinite number of points and
a draw from this process %$G\sim\mbox{GaP}(c,G_0)$
can be expressed as
$ %\beq
G = \sum_{k=1}^{\infty} r_k \delta_{\omega_k},~(r_k, \omega_k)\sim \pi(dr d\omega),~\pi(dr d\omega)\nu^{+} \equiv \nu(dr d\omega)$ \cite{Wolp:Clyd:Tu:2011}. %\eeq
Here we consider a discrete base measure as $G_0=\sum_{k=1}^K \frac{\gamma_0}{K}\delta_{\omega_k},~\omega_k\sim g_0(\omega_k)$, then we have $G = \sum_{k=1}^{K} r_k \delta_{\omega_k}$, $r_k\sim\mbox{Gamma}(\gamma_0/K,1/c), \omega_k\sim g_0(\omega_k)$, which becomes a draw from the gamma process with a continuous base measure as $K\rightarrow \infty$.
Let $x_{ji}\sim F(\omega_{z_{ji}})$ be observation $i$ in group $j$, linked to a mixture component $\omega_{z_{ji}}\in\Omega$ through a distribution $F$.
Denote $n_{jk}=\sum_{i=1}^{N_j}\delta({z_{ji}=k})$, we can express the gamma-NB process with the discrete base measure as
\vspace{-1mm}\beqs
&\omega_k \sim g_0(\omega_k),~N_j = \sum_{k=1}^K n_{jk},~n_{jk}\sim \mbox{Pois}(\lambda_{jk}),~\lambda_{jk}\sim \mbox{Gamma}(r_k,p_j/(1-p_j))\notag\\
&r_k\sim\mbox{Gamma}(\gamma_0/K,1/c),~p_j\sim\mbox{Beta}(a_0,b_0),~\gamma_0\sim \mbox{Gamma}(e_0,1/f_0)\label{eq:GNBP_mixture}\vspace{-1mm}
\eeqs
where marginally we have $n_{jk}\sim\mbox{NB}(r_k,p_j)$.
Using the equivalence between (1) and (2), we can equivalently express %$N_j = \sum_{k=1}^K n_{jk},~n_{jk}\sim \mbox{Pois}(\lambda_{jk})$
$N_j$ and $n_{jk}$
in the above model as
$
N_j\sim \mbox{Pois}\left(\lambda_j\right),~[n_{j1},\cdots,n_{jK}] \sim \mbox{Mult}\left(N_j;{\lambda_{j1}}/{\lambda_j},\cdots,{\lambda_{jK}}/{\lambda_j}\right)
$,
where $\lambda_j = \sum_{k=1}^K\lambda_{jk}$.
Since the data $\{x_{ji}\}_{i=1,N_j}$  are fully exchangeable, rather than drawing $[n_{j1},\cdots,n_{jK}]$ once, we may equivalently draw the index
\vspace{-.8mm}\beq \label{eq:z_ji}
z_{ji} \sim \mbox{Discrete}\left({\lambda_{j1}}/\lambda_j,\cdots,{\lambda_{jK}}/\lambda_j\right)\vspace{-1mm}
\eeq
for each $x_{ji}$ and then let $n_{jk} = \sum_{i=1}^{N_j}\delta(z_{ji}=k)$.
This provides further insights on how the seemingly disjoint problems of count and mixture modeling are united under the NB process framework.
Following (\ref{eq:post_start})-(\ref{eq:Gpost}),
the block Gibbs sampling
is straightforward to write as
\vspace{-.5mm}\beqs \label{eq:inference}\small
&p(\omega_k|-) \propto \prod_{z_{ji}=k}F(x_{ji};\omega_k) g_0(\omega_k), ~~\mbox{Pr}(z_{ji}=k|-) \propto F(x_{ji};\omega_{k})\lambda_{jk}\notag\\
&(p_j|-)\sim \mbox{Beta}\bigg(a_0+N_j,b_0+ \sum_{k}r_k\bigg),~p' = \frac{-\sum_j\ln(1-p_j)}{c-\sum_j\ln(1-p_j)},~(l_{jk}|-) \sim \mbox{CRT}(n_{jk},r_k)\notag\\
&(l'_{k}|-) \sim \mbox{CRT}(\sum_{j} l_{jk},\gamma_0/K),
~~(\gamma_0|-)  \sim  \mbox{Gamma}  \bigg( e_0  +   \sum_{k}l'_{k}, \frac{1}{f_0  -  \ln( 1 - p')} \bigg)\notag\\
&(r_k|-) \sim \mbox{Gamma} \bigg( \gamma_0/K +  \sum_{j}  l_{jk}, \frac{1}{c - \sum_{j} \ln( 1 - p_j)}\bigg), %\notag\\
~~(\lambda_{jk}|-) \sim \mbox{Gamma}(r_k+n_{jk},p_j).\vspace{-2mm}
\eeqs
%where $p'_j=\frac{-\ln(1-p_j)}{c-\ln(1-p_j)}$.
which has similar computational complexity as that of the direct assignment block Gibbs sampling of the CRF-HDP \cite{HDP,HDP-HMM}. If $g_0(\omega)$ is conjugate to the likelihood $F(x;\omega)$, then the posterior $p(\omega|-)$ would be analytical.
Note that when $K\rightarrow \infty$, we have $(l'_{k}|-)=\delta(\sum_jl_{jk}>0)=\delta(\sum_{j}n_{jk}>0)$.

 Using (1) and (2) and normalizing the gamma distributions, (\ref{eq:GNBP_mixture}) can be re-expressed as
\beqs
&
{z_{ji}}\sim \mbox{Discrete}(\tilde{\lambdav}_j), ~\tilde{\lambdav}_j\sim \mbox{Dir}(\alpha \tilde{\rv}),~\alpha\sim\mbox{Gamma}(\gamma_0,1/c),~\tilde{\rv} \sim \mbox{Dir}({\gamma_0}/{K},\cdots,{\gamma_0}/{K})
\eeqs
which loses the count modeling ability and becomes a finite representation of the HDP, the inference of which is not conjugate and has to be solved under alternative representations %with considerable bookkeeping
\cite{HDP,HDP-HMM}. % such as the Chinese restaurant franchise \cite{HDP}. %Therefore, the original form of the gamma-NB process is preferred.
This also implies that by using the Dirichlet process as the foundation, traditional mixture modeling %approaches may have unknowingly discarded
may discard useful count information from the beginning.

\vspace{-3mm}
\section{The Negative Binomial Process Family and Related Algorithms}
\vspace{-3mm}
The gamma-NB process shares the NB dispersion across groups. Since the NB distribution has two adjustable parameters, we may explore alternative ideas, with the NB probability measure shared across groups as in \cite{NBPJordan}, or with both the dispersion and probability measures shared as in \cite{BNBP_PFA}. These constructions are distinct from both the gamma-NB process and HDP in that $\Lambda_j$ has space dependent scales, and thus its normalization $\widetilde{\Lambda}_j= {\Lambda_j}/\Lambda_j(\Omega)$ no longer follows a Dirichlet process.

 It is natural to let the probability measure be drawn from a beta process \cite{Hjort,JordanBP}, which can be defined by its L\'{e}vy  measure on a product space $[0,1]\times \Omega$ as
 $
 \nu(dpd\omega)=cp^{-1}(1-p)^{c-1}dp B_0(d\omega).
 $
  A draw from the beta process $B\sim\mbox{BP}(c,B_0)$ with concentration parameter $c$ and base measure $B_0$ can be expressed as
$
B = \sum_{k=1}^\infty p_k \delta_{\omega_k}. %,~(p_k, \omega_k)\sim \pi(dp d\omega), ~\pi(dp d\omega)\nu^{+}=\nu(dp d\omega).
$ A beta-NB process \cite{BNBP_PFA,NBPJordan} can be constructed by letting $X_j\sim\mbox{NBP}(r_j,B)$, with a random draw
expressed as
$
X_j = \sum_{k=1}^\infty n_{jk}\delta_{\omega_k}, ~n_{jk}\sim \mbox{NB}(r_j,p_k).
$ Under this construction, the NB probability measure is shared  and the NB dispersion parameters are group dependent. As in \cite{BNBP_PFA}, we may also consider a marked-beta-NB\footnote{We may also consider a beta marked-gamma-NB process, whose performance is found to be very similar.} process that both the NB probability and dispersion measures are shared, in which each point of the beta process is marked with an independent gamma random variable. Thus a draw from the marked-beta process becomes $(R,B) = \sum_{k=1}^\infty (r_k,p_k) \delta_{\omega_k}$, and the NB process $X_j\sim \mbox{NBP}(R,B)$ becomes $
X_j = \sum_{k=1}^\infty n_{jk}\delta_{\omega_k}, ~n_{jk}\sim \mbox{NB}(r_k,p_k).
$
Since the beta and NB processes are conjugate, the posterior of $B$ is tractable, as shown in \cite{BNBP_PFA,NBPJordan}. If it is believed that there are excessive number of zeros, governed by  a process other than the NB process, we may introduce a zero inflated NB process as $X_j \sim \mbox{NBP}(RZ_j,p_j)$, where $Z_j\sim \mbox{BeP}(B)$ is drawn from the Bernoulli process \cite{JordanBP} and $(R,B) = \sum_{k=1}^\infty (r_k,\pi_k)\delta_{\omega_k}$ is drawn from a marked-beta process, thus $n_{jk}\sim \mbox{NB}(r_k b_{jk}, p_j), ~b_{jk}  = \mbox{Bernoulli}(\pi_k)$. This construction can be linked to the model in \cite{FocusTopic} with appropriate normalization, with advantages that there is no need to fix $p_j=0.5$ and the inference is fully tractable. The zero inflated construction can also be linked to models for real valued data using the Indian buffet process (IBP) or beta-Bernoulli process spike-and-slab prior \cite{IBP,BPFA2012,dHBP,DictTopic}.

\vspace{-3mm}
\subsection{Related Algorithms}
\vspace{-2mm}
To show how the NB processes can be diversely constructed and to make connections to previous parametric and nonparametric mixture models, we show in Table \ref{Tab:Relationships} a variety of NB processes, which differ on how the dispersion and probability measures are shared.
 For a deeper understanding on how the counts are modeled, we also show in Table \ref{Tab:Relationships} both the VMR and ODL implied by these settings.
We consider topic modeling of a document corpus, a typical example of mixture modeling of grouped data, where each a-bag-of-words document constitutes a group, each word is an exchangeable group member, and $F(x_{ji};\omega_k)$ is simply the probability of word $x_{ji}$ in topic $\omega_k$.

We consider six differently constructed NB processes in  Table \ref{Tab:Relationships}:
(\emph{i}) Related to latent Dirichlet  allocation (LDA) \cite{LDA} %, whose topic proportion Dirichlet smoothing parameter is estimated with empirical Bayes,
and Dirichlet Poisson factor analysis (Dir-PFA) \cite{BNBP_PFA}, the NB-LDA is also a parametric topic model that requires tuning the number of topics. However, it uses a document dependent $r_j$ and $p_j$ to automatically learn the smoothing of the gamma distributed topic weights, and it lets $r_j\sim\mbox{Gamma}(\gamma_0,1/c),~\gamma_0\sim\mbox{Gamma}(e_0,1/f_0)$ to share statistical strength between documents, with closed-form Gibbs sampling inference. Thus even the most basic parametric LDA topic model can be improved under the NB count modeling framework.
(\emph{ii}) The NB-HDP model  is related to the HDP \cite{HDP}, and since $p_j$ is an irrelevant parameter in the HDP due to normalization, we set it in the NB-HDP as 0.5, the usually perceived value before normalization. The NB-HDP model is comparable to the DILN-HDP \cite{DILN} that constructs the group-level DPs with normalized gamma processes, whose scale parameters are also set as one. %value in the HDP.
(\emph{iii}) The NB-FTM model introduces an additional\linebreak beta-Bernoulli process under the NB process framework to explicitly model zero counts.
It is the same as the sparse-gamma-gamma-PFA (S$\gamma\Gamma$-PFA) in \cite{BNBP_PFA} and is comparable to the focused topic model (FTM) \cite{FocusTopic}, which is constructed from the IBP compound DP. Nevertheless, they apply about the same likelihoods and priors for inference.
The Zero-Inflated-NB process improves over them by allowing $p_j$ to be inferred, which generally yields better data fitting.
(\emph{iv}) The Gamma-NB process  explores the idea that the  dispersion measure is shared across groups, and it improves over the NB-HDP by allowing the learning of $p_j$. It reduces to the HDP \cite{HDP} by normalizing both the group-level and the shared gamma processes.
(\emph{v}) The Beta-NB process  explores sharing the probability measure  across groups, and it improves over the beta negative binomial process (BNBP) proposed in \cite{NBPJordan}, allowing inference of $r_j$.
(\emph{vi}) The Marked-Beta-NB process is comparable to the BNBP proposed in \cite{BNBP_PFA}, with the distinction that it allows analytical update of $r_k$. The constructions and inference of various NB processes and related algorithms in Table \ref{Tab:Relationships} all follow the formulas in (\ref{eq:GNBP_mixture}) and (\ref{eq:inference}), respectively, with additional details presented in the supplementary material.

Note that as shown in \cite{BNBP_PFA}, NB process topic models can also be considered as factor analysis of the term-document count matrix under the Poisson likelihood, with $\omega_k$ as the $k$th factor loading that sums to one and $\lambda_{jk}$ as the factor score, which can be further linked to nonnegative matrix factorization \cite{NMF} and a gamma Poisson factor model \cite{CannyGaP}. If except for proportions $\tilde{\lambdav}_j$ and $\tilde{r}$, the absolute values, e.g., $\lambda_{jk}$, $r_k$ and $p_k$, are also of interest, %e.g., for inferring the underling Poisson intensity or NB distribution parameters of counts, %of a count and for inferring the NB parameters of the count distribution predicting the distribution of the count assigned to a factor for a new sample,
then the NB process based joint count and mixture models would apparently be more appropriate than the HDP based mixture models.

\begin{table}\caption{\small A variety of negative binomial processes are constructed with distinct sharing mechanisms, reflected with which parameters from $r_k$, $r_j$, $p_k$, $p_j$ and $\pi_k$ ($b_{jk}$) are inferred (indicated by a check-mark~$\checkmark$), and the implied VMR and ODL
for counts $\{n_{jk}\}_{j,k}$. They are applied for topic modeling of a document corpus, a typical example of mixture modeling of grouped data. Related  algorithms are shown in the last column.}
\centering
{\footnotesize
\begin{tabular}{|c|c|c|c|c|c|c|c|c|}
  \hline
  % after \\: \hline or \cline{col1-col2} \cline{col3-col4} ...
  %PFA                       & Infer & Infer & Infer &Infer &Infer & Related  \\
  Algorithms                     & $r_k$ & $r_j$ & $p_k$ & $p_j$ & $\pi_k$ & VMR & ODL & Related Algorithms\\ \hline
  %LDA             &  &  &  &  & & No & \\\hline
  NB-LDA             &  & $\checkmark$ &  & $\checkmark$ & & $(1-p_j)^{-1}$ & $r_j^{-1}$ &LDA \cite{LDA}, Dir-PFA \cite{BNBP_PFA} \\\hline
  NB-HDP                   & $\checkmark$ & &  & $0.5$ & & $2$ & $r_k^{-1}$ &HDP\cite{HDP}, DILN-HDP \cite{DILN} \\ \hline
  NB-FTM       & $\checkmark$ &  & & $0.5$ & $\checkmark$ & $2$ & $(r_k)^{-1}b_{jk}$& FTM \cite{FocusTopic}, S$\gamma\Gamma$-PFA \cite{BNBP_PFA}\\  \hline
%  Zero-Inflated-NB   & $\checkmark$ &  & & $\checkmark$& $\checkmark$ & $(1-p_j)^{-1}$ & $r_j^{-1}$ &\\\hline
%  Zero-Inflated-NB   & $\checkmark$ &  &$\checkmark$ & & $\checkmark$ & $(1-p_j)^{-1}$ & $r_j^{-1}$ &\\\hline
  Beta-NB        &  & $\checkmark$  & $\checkmark$ &  & & $(1-p_k)^{-1}$ & $r_j^{-1}$ &  BNBP \cite{BNBP_PFA}, BNBP \cite{NBPJordan}\\ \hline
  Gamma-NB                   & $\checkmark$ & &  & $\checkmark$ & & $(1-p_j)^{-1}$ & $r_k^{-1}$ & CRF-HDP \cite{HDP,HDP-HMM} \\ \hline
  %$\beta\gamma\Gamma$-PFA       & $\checkmark$ &  & $\checkmark$ &  &   \\
  Marked-Beta-NB       & $\checkmark$ &  & $\checkmark$ &  & & $(1-p_k)^{-1}$ & $r_k^{-1}$ & BNBP \cite{BNBP_PFA}   \\
  \hline
\end{tabular}\label{Tab:Relationships}
}\vspace{-4mm}
\end{table}
\vspace{-3mm}
\section{Example Results}
\vspace{-3mm}

Motivated by  Table \ref{Tab:Relationships}, we consider topic modeling using a variety of  NB processes, which differ on which parameters are learned and consequently how the VMR and ODL of the latent counts $\{n_{jk}\}_{j,k}$ are modeled. We compare them with LDA \cite{LDA,FindSciTopic} and CRF-HDP \cite{HDP,HDP-HMM}. For fair comparison, they are all implemented with block Gibbs sampling using a discrete base measure with $K$ atoms, and for the first fifty iterations, the Gamma-NB process with $r_k\equiv 50/K$ and $p_j\equiv 0.5$ is used for initialization. % with random weights and random locations.
For LDA and NB-LDA, we search $K$ for optimal performance and for the other models, we set $K=400$ as an upper-bound.
We set the parameters as $c=1$, $\eta=0.05$ and $a_0=b_0=e_0=f_0=0.01$.
For LDA, we set the topic proportion Dirichlet smoothing parameter as $50/K$, following the topic model toolbox$^2$ provided for \cite{FindSciTopic}. We consider 2500 Gibbs sampling iterations, with the last 1500 samples collected.
Under the NB processes, each word $x_{ji}$ would be assigned to a topic $k$ based on both $F(x_{ji};\omega_k)$ and the topic weights $\{\lambda_{jk}\}_{k=1,K}$; each topic is drawn from a Dirichlet  base measure as $\omega_k \sim \mbox{Dir}(\eta,\cdots,\eta) \in \mathbb{Re}^{V}$, where $V$ is the number of unique terms in the vocabulary and $\eta$ is a smoothing parameter.
Let $v_{ji}$ denote the location of word $x_{ji}$ in the vocabulary, then we have
$ %\beqs
(\omega_k|-)\sim \mbox{Dir}\big(\eta+ \sum_{j}\sum_{i}\delta(z_{ji}=k,v_{ji}=1),  \cdots,\eta+\sum_{j}\sum_{i}\delta(z_{ji}=k,v_{ji}=V)\big).
$
We consider the Psychological Review\footnote{\label{footnote}http://psiexp.ss.uci.edu/research/programs$\_$data/toolbox.htm} corpus, restricting the vocabulary to terms that occur in
five or more documents. The corpus includes 1281 abstracts from 1967 to 2003, with 2,566 unique terms and 71,279 total word counts.
We randomly select $20\%$, $40\%$, $60\%$ or $80\%$ of the words from each document  to learn a document dependent probability for each term $v$ as $f_{jv} = {\sum_{s=1}^S\sum_{k=1}^K\omega^{(s)}_{vk}\lambda^{(s)}_{jk}}\big/{\sum_{s=1}^S\sum_{v=1}^V\sum_{k=1}^K\omega^{(s)}_{vk}\lambda^{(s)}_{jk}}$, where $\omega_{vk}$ is the probability of term $v$ in topic $k$ and $S$ is the total number of collected samples. We use $\{f_{jv}\}_{j,v}$ to calculate the per-word perplexity on the held-out words as in \cite{BNBP_PFA}.
The final results are averaged from five random training/testing partitions. %The performance measure is also similar to those used in \cite{AsuWelSmy2009a,wallach09,VBHDP}.
Note that the perplexity per test word is the fair metric to compare topic models.
However, when the actual Poisson rates or distribution parameters for counts instead of the mixture proportions are of interest, it is obvious that a NB process based joint count and mixture model would be more appropriate than an HDP based mixture model.

Figure \ref{fig:Perplexity} compares the performance of various algorithms. The Marked-Beta-NB process has the best performance, closely followed by the Gamma-NB process, CRF-HDP and Beta-NB process. With an appropriate $K$, the parametric NB-LDA may outperform the nonparametric NB-HDP and NB-FTM as the training data percentage increases, somewhat unexpected but very intuitive results, showing that even by learning both the NB dispersion and probability parameters $r_j$ and $p_j$ in a document dependent manner, we may get better data fitting than using nonparametric models that share the NB dispersion parameters $r_k$ across documents, but fix the NB probability parameters.

\begin{figure}[!tb]
\begin{center}
\includegraphics[width=108mm]{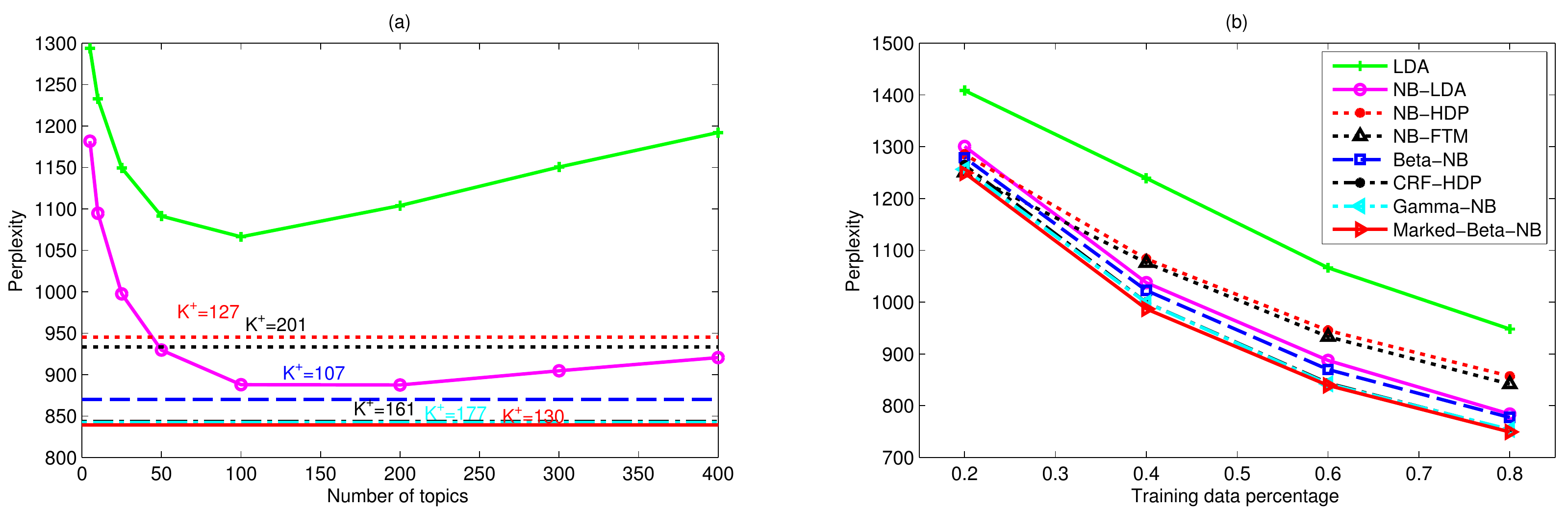}
\end{center}
\vspace{-4.0mm}
\caption{\small \label{fig:Perplexity}
Comparison of per-word perplexities on the held-out words between various algorithms.  (a) With $60\%$ of the words in each document used for training, the performance varies as a function of $K$ in both LDA and NB-LDA, which are parametric models, whereas the NB-HDP, NB-FTM, Beta-NB, CRF-HDP, Gamma-NB and Marked-Beta-NB all infer the number of active topics, which are 127, 201, 107, 161, 177 and 130, respectively, according to the last Gibbs sampling iteration. (b) Per-word perplexities of various models as a function of the percentage of words in each document used for training. The results of the LDA and NB-LDA are shown with the best settings of $K$ under each training/testing partition.
\vspace{-4.0mm}
}
\end{figure}

Figure \ref{fig:MakeSense} shows the learned model parameters by various algorithms under the NB process framework, revealing distinct sharing mechanisms and model properties. When $(r_j,p_j)$ is used, as in the NB-LDA, different documents are weakly coupled with $r_j\sim\mbox{Gamma}(\gamma_0,1/c)$,
and the modeling results show that  a typical document in this corpus usually has a small $r_j$ and a large $p_j$, thus a large ODL and a large VMR, indicating highly overdispersed counts on its topic usage.
When $(r_j,p_k)$ is used to model the latent counts $\{n_{jk}\}_{j,k}$, as in the Beta-NB process, the transition between active and non-active topics is very sharp that $p_k$ is either close to one or close to zero. That is because $p_k$ controls the mean as $\E[\sum_{j}n_{jk}]=p_k/(1-p_k)\sum_{j}r_j$ and the VMR as $(1-p_k)^{-1}$ on topic $k$, thus a popular topic must also have large $p_k$ and thus large overdispersion measured by the VMR; since the counts $\{n_{jk}\}_{j}$ are  usually overdispersed, particularly  true in this corpus, a middle range $p_k$ indicating an appreciable mean and small overdispersion is not favored by the model and thus is rarely observed. When $(r_k,p_j)$ is used, as in the Gamma-NB process, the transition is much smoother that $r_k$ gradually decreases. The reason is that  $r_k$ controls the mean as $\E[\sum_{j}n_{jk}]=r_k\sum_{j}p_j/(1-p_j)$ and the ODL $r_k^{-1}$ on topic $k$, thus popular topics must also have large $r_k$ and thus small overdispersion measured by the ODL, and unpopular topics are modeled with small $r_k$ and thus large overdispersion, allowing rarely and lightly used topics. Therefore, we can expect that $(r_k,p_j)$ would allow more topics than $(r_j,p_k)$,  as confirmed in Figure \ref{fig:Perplexity} (a) that the Gamma-NB process learns 177 active topics, significantly  more than the 107 ones of the Beta-NB process. With these analysis, we can conclude that the mean and the amount of overdispersion (measure by the VMR or ODL) for the usage of topic $k$ is positively correlated under $(r_j,p_k)$ and negatively correlated under $(r_k,p_j)$.

When $(r_k,p_k)$ is used, as in the Marked-Beta-NB process, more diverse combinations of mean and overdispersion would be allowed as both $r_k$ and $p_k$ are now responsible for the mean $\E[\sum_{j}n_{jk}]=Jr_kp_k/(1-p_k)$. For example, there could be not only large mean and small overdispersion (large $r_k$ and small $p_k$), but also large mean and large overdispersion (small $r_k$ and large $p_k$). Thus $(r_k,p_k)$ may combine the advantages of using only $r_k$ or $p_k$ to model topic $k$, as confirmed by the superior performance of the Marked-Beta-NB over the Beta-NB and Gamma-NB processes.  When $(r_k, \pi_k)$ is used, as in the NB-FTM model, our results show that we usually have a small $\pi_k$ and a large $r_k$, indicating topic $k$ is sparsely used across the documents but once it is used, the amount of variation on usage is small. This modeling properties might be helpful when there are excessive number of zeros which might not be well modeled by the NB process alone. In our experiments, we find the more direct approaches of using $p_k$ or $p_j$ generally yield better results, but this might not be the case when excessive number of zeros are better explained with the underlying beta-Bernoulli or IBP processes, e.g., when the training words are scarce.

It is also interesting to compare the Gamma-NB and NB-HDP. From a mixture-modeling viewpoint, fixing $p_j = 0.5$ is natural as $p_j$ becomes irrelevant after normalization. However, from a count modeling viewpoint, this would make a restrictive assumption that each count vector $\{n_{jk}\}_{k=1,K}$ has the same VMR of 2, and the experimental results in Figure \ref{fig:Perplexity} confirm the importance of learning $p_j$ together with $r_k$.
It is also interesting to examine (\ref{eq:alphaPost}), which can be viewed as the  concentration parameter $\alpha$  in the HDP, allowing the adjustment of $p_j$ would allow a more flexible model assumption on the amount of variations between the topic proportions, and thus potentially better data fitting.  %that $p_j$ would govern  the amount of variations between the topic proportions of different documents.

\begin{figure}[!tb]
\begin{center}
\includegraphics[width=110mm]{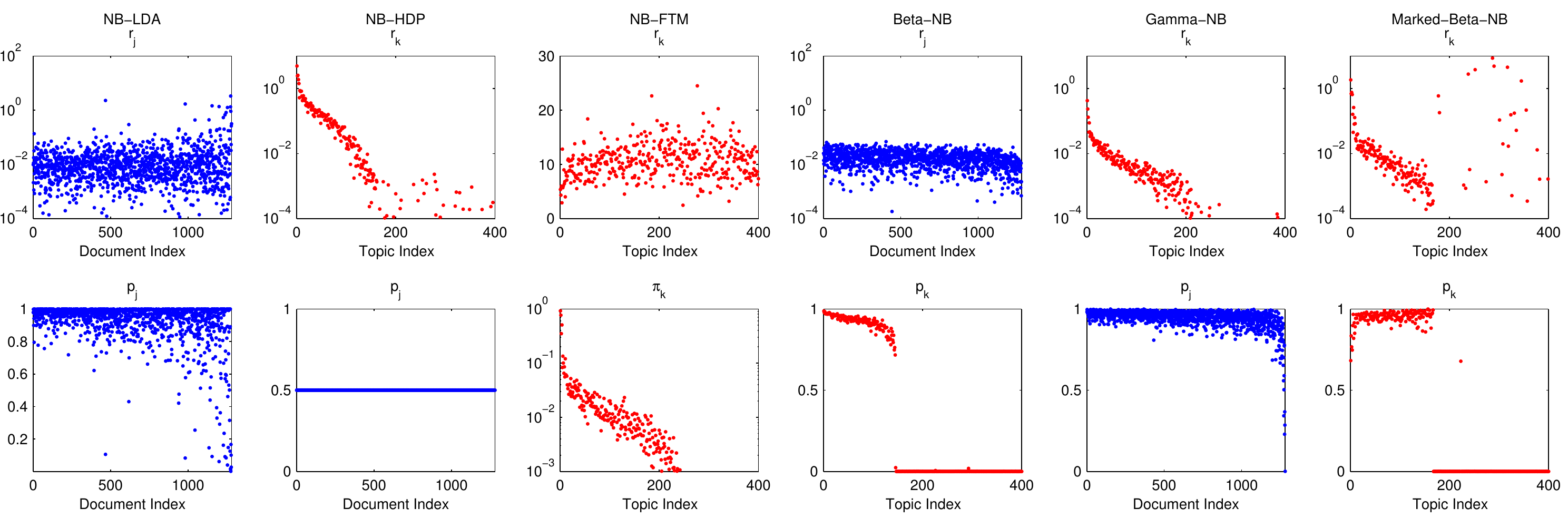}
\end{center}
\vspace{-4.0mm}
\caption{\small \label{fig:MakeSense}
Distinct sharing mechanisms and model properties are evident between various NB processes, by comparing their inferred parameters. Note that the transition between active and non-active topics is very sharp when $p_k$ is used and much smoother when $r_k$ is used.  Both the documents and topics are ordered in a decreasing order based on the number of words associated with each of them. These results are based on the last Gibbs sampling iteration.
The values are shown in either linear or log scales for convenient visualization.
\vspace{-4.5mm}
}
\end{figure}

\vspace{-3mm}
\section{Conclusions}
\vspace{-3mm}
We propose a variety of negative binomial (NB) processes to jointly model counts across groups,
which can be naturally applied for mixture modeling of grouped data. The proposed NB processes are completely random measures that they assign independent random variables to disjoint Borel sets of the measure space, as opposed to the hierarchical Dirichlet process (HDP) whose measures on disjoint Borel sets are negatively correlated. We discover augment-and-conquer inference methods  that by ``augmenting'' a NB process into both the gamma-Poisson and compound Poisson representations, we are able to ``conquer'' the unification of count and mixture modeling, the analysis of fundamental model properties and the derivation of efficient  Gibbs sampling inference.
We demonstrate that the gamma-NB process, which shares the NB dispersion measure across groups,  can be normalized to produce the HDP and we show in detail its theoretical, structural and computational advantages over the HDP. We examine the distinct sharing mechanisms and model properties of various NB processes, with connections to existing algorithms, with experimental results on topic modeling showing the importance of modeling both the NB dispersion and probability parameters.

\vspace{-2mm}
\subsubsection*{Acknowledgments}
\vspace{-2mm}
%\small
The research reported here was supported by ARO, DOE, NGA, and ONR, and by DARPA under the MSEE and HIST programs.
%\newpage
%\vspace{-2mm}
%\scriptsize
\small
\bibliography{NBdistribution,PGFA_NIPS,HBP_AISTATS2011,GammaNBP}
\bibliographystyle{unsrt}
%Comments on DILN, emily fox HDP paper, address other comments.

%\subsubsection*{References}
%
%References follow the acknowledgments. Use unnumbered third level heading for
%the references. Any choice of citation style is acceptable as long as you are
%consistent. It is permissible to reduce the font size to `small' (9-point)
%when listing the references. {\bf Remember that this year you can use
%a ninth page as long as it contains \emph{only} cited references.}
%
%\small{
%[1] Alexander, J.A. \& Mozer, M.C. (1995) Template-based algorithms
%for connectionist rule extraction. In G. Tesauro, D. S. Touretzky
%and T.K. Leen (eds.), {\it Advances in Neural Information Processing
%Systems 7}, pp. 609-616. Cambridge, MA: MIT Press.
%
%[2] Bower, J.M. \& Beeman, D. (1995) {\it The Book of GENESIS: Exploring
%Realistic Neural Models with the GEneral NEural SImulation System.}
%New York: TELOS/Springer-Verlag.
%
%[3] Hasselmo, M.E., Schnell, E. \& Barkai, E. (1995) Dynamics of learning
%and recall at excitatory recurrent synapses and cholinergic modulation
%in rat hippocampal region CA3. {\it Journal of Neuroscience}
%{\bf 15}(7):5249-5262.
%}

\newpage
\normalsize

\appendix{Augment-and-Conquer Negative Binomial Processes: Supplementary Material}

%\title{Supplementary Materials}
\section{Generating a CRT random variable}
\begin{lem}\label{lem:BernoulliSum} A CRT random variable $l\sim\emph{\mbox{CRT}}(m,r)$ can be generated with the summation of independent Bernoulli random variables as
\beq
l =\sum_{n=1}^m b_n,~b_n\sim \mbox{\emph{Bernoulli}}\left(\frac{r}{n-1+r}\right).
\eeq
\end{lem}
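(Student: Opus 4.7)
The plan is to match probability generating functions (PGFs). Writing $S = \sum_{n=1}^m b_n$ with independent $b_n \sim \text{Bernoulli}(r/(n-1+r))$, each term has PGF
\[
\mathbb{E}[z^{b_n}] = \frac{n-1}{n-1+r} + \frac{r}{n-1+r}z = \frac{n-1+rz}{n-1+r},
\]
so by independence
\[
\mathbb{E}[z^{S}] = \prod_{n=1}^{m}\frac{n-1+rz}{n-1+r} = \frac{\prod_{n=1}^{m}(rz+n-1)}{\prod_{n=1}^{m}(r+n-1)} = \frac{\Gamma(rz+m)/\Gamma(rz)}{\Gamma(r+m)/\Gamma(r)},
\]
using the rising-factorial identity $x(x+1)\cdots(x+m-1)=\Gamma(x+m)/\Gamma(x)$.

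On the other side, if $l$ has the CRT PMF from (\ref{eq:CRT}), then
\[
\mathbb{E}[z^{l}] = \frac{\Gamma(r)}{\Gamma(m+r)}\sum_{j=0}^{m}|s(m,j)|\,(rz)^{j}.
\]
The standard identity relating unsigned Stirling numbers of the first kind to rising factorials, $\sum_{j=0}^{m}|s(m,j)|x^{j}=x(x+1)\cdots(x+m-1)=\Gamma(x+m)/\Gamma(x)$, applied with $x=rz$, collapses the sum to $\Gamma(rz+m)/\Gamma(rz)$. This gives exactly the PGF computed for $S$, so $S \stackrel{d}{=} l$.

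The only mildly delicate point is quoting the rising-factorial/Stirling identity with the correct sign convention (the one already used in the proof of Lemma \ref{lem:divide}, where $S_r(m)=\sum_{j=0}^{m}|s(m,j)|r^{j}$ was shown by recursion to equal $\Gamma(m+r)/\Gamma(r)$); in fact one may simply reuse that computation verbatim with $r$ replaced by $rz$, avoiding any separate appeal to a combinatorial identity. A short alternative would be induction on $m$: assuming the claim for $m-1$, condition on $\sum_{n=1}^{m-1}b_n$ and verify that adding $b_m\sim\text{Bernoulli}(r/(m-1+r))$ reproduces the Stirling recursion $|s(m,j)|=|s(m-1,j-1)|+(m-1)|s(m-1,j)|$ after normalization by $\Gamma(m+r)/\Gamma(r)$; this is routine but bookkeeping-heavy, which is why the PGF route is preferable.
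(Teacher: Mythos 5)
Your proposal is correct and follows essentially the same route as the paper's own proof: the paper likewise computes the PGF of the Bernoulli sum, $\prod_{n=1}^m \bigl( \tfrac{n-1}{n-1+r} + \tfrac{r}{n-1+r}z \bigr) = \tfrac{\Gamma(r)}{\Gamma(m+r)}\sum_{k=0}^m |s(m,k)|(rz)^k$, via the same unsigned-Stirling/rising-factorial identity. The only cosmetic difference is that the paper reads off the PMF by differentiating this PGF at zero, whereas you compute the CRT PGF separately and invoke uniqueness of PGFs; the core computation is identical.
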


\begin{proof}
Since $l$ is the summation of independent Bernoulli random variables, its PGF becomes
\begin{align}\notag
C_L(z)  = \prod_{n=1}^m \left( \frac{n-1}{n-1+r} + \frac{r}{n-1+r}z \right)
 %= \frac{(rz)(rz+1)\cdots(rz+m-1)}{r(r+1)\cdots(r+m-1)}\notag\\
= {\frac{\Gamma(r)}{\Gamma(m+r)}} \sum_{k=0}^m|s(m,k)| (rz)^k.
\end{align}
Thus we have $
%\beq
%{\mbox{Pr}}(l = j| m,r)
f_L(l|m,r) = \frac{C^{(l)}_L(0)}{l!} = {\frac{\Gamma(r)}{\Gamma(m+r)}}|s(m,l)| r^l, ~~ l=0,1,\cdots,m.
$ %\eeq
\end{proof}

\section{Dir-PFA and LDA}

The Dirichlet Poisson factor analysis (Dir-PFA) model [5] is constructed as
\begin{align}
&x_{ji} \sim F(\omega_{z_{ji}}), ~~\omega_k \sim \mbox{Dir}(\eta,\cdots,\eta)\notag\notag\\
&N_j = \sum_{k=1}^K n_{jk},~~n_{jk}\sim \mbox{Pois}(\tilde{\lambda}_{jk}),~~\tilde{\lambdav}_{j}\sim \mbox{Dir}(50/K,\cdots,50/K)
\end{align}
where $\eta$ is the Dirichlet smoothing parameter for the topic's distribution over the vocabulary, $n_{jk}=\sum_{i=1}^{N_j}\delta({z_{ji}=k})$, and the data likelihood $F(x_{ji};\omega_{k})$ in topic modeling is $\omega_{v_{ji}k}$, the probability of the $i$th word in $j$th document under topic $\omega_{k}$.

The Dir-PFA has the same block Gibbs sampling as LDA [34], expressed as
\begin{align}
&\mbox{Pr}(z_{ji}=k|-) \propto F(x_{ji};\omega_{k})\tilde{\lambda}_{jk}\notag\\
%&p_j\sim \mbox{Beta}\left(c/K+n_k,c(1-1/K)+ \sum_{j=1}^Jr_j\right)\notag\\
&(\omega_k|-)\sim \mbox{Dir}\left(\eta+ \sum_{j=1}^J\sum_{i=1}^{N_j}\delta(z_{ji}=k,v_{ji}=1),  \cdots,\eta+\sum_{j=1}^J\sum_{i=1}^{N_j}\delta(z_{ji}=k,v_{ji}=V)\right)\notag\\
&(\tilde{\lambdav}_j|-)\sim \mbox{Dir}\left( 50/K + n_{j1},\cdots,50/K + n_{jK}\right).
\end{align}
%where $n_{jk}=\sum_{i=1}^{N_j}\delta({z_{ji}=k})$.

\section{CRF-HDP}

The CRF-HDP  model [7, 26] is constructed as
\begin{align}
&x_{ji} \sim F(\omega_{z_{ji}}), ~~\omega_k \sim \mbox{Dir}(\eta,\cdots,\eta),~~{z_{ji}}\sim \mbox{Discrete}(\tilde{\lambdav}_j)\notag\\
&\tilde{\lambdav}_j\sim \mbox{Dir}(\alpha \tilde{\rv}),~~\alpha\sim\mbox{Gamma}(a_0,1/b_0),~~\tilde{\rv} \sim \mbox{Dir}({\gamma_0}/{K},\cdots,{\gamma_0}/{K}). \end{align}

Under the CRF metaphor, denote $n_{jk}$ as the number of customers eating dish $k$ in restaurant $j$  and $l_{jk}$ as the number of tables serving dish $k$ in restaurant $j$, the direct assignment block Gibbs sampling can be expressed as
\begin{align}
&\mbox{Pr}(z_{ji}=k|-) \propto F(x_{ji};\omega_{k})\tilde{\lambda}_{jk}\notag\\
&(l_{jk}|-) \sim \mbox{CRT}(n_{jk},\alpha\tilde{r}_k),~~w_j \sim \mbox{Beta}(\alpha+1,N_j),~~s_j \sim \mbox{Bernoulli}\left(\frac{N_j}{N_j+\alpha}\right)\notag\\
&\alpha\sim\mbox{Gamma}\left(a_0+\sum_{j=1}^J\sum_{k=1}^K l_{jk}- \sum_{j=1}^J  s_{j},\frac{1}{b_0-\sum_{j}\ln  w_j}\right)\notag\\
&(\tilde\rv|-) \sim \mbox{Dir} \left( \gamma_0/K +  \sum_{j=1}^J  l_{j1},\cdots,\gamma_0/K +  \sum_{j=1}^J  l_{jK}\right)\notag\\
&(\tilde{\lambdav}_j|-)\sim \mbox{Dir}\left( \alpha \tilde{r}_1 + n_{j1},\cdots,\alpha \tilde{r}_K + n_{jK}\right)\notag\\
&(\omega_k|-)\sim \mbox{Dir}\left(\eta+ \sum_{j=1}^J\sum_{i=1}^{N_j}\delta(z_{ji}=k,v_{ji}=1),  \cdots,\eta+\sum_{j=1}^J\sum_{i=1}^{N_j}\delta(z_{ji}=k,v_{ji}=V)\right).
\end{align}
When $K\rightarrow\infty$, the concentration parameter $\gamma_0$ can be sampled as
\begin{align}
w_0 &\sim \mbox{Beta}\left(\gamma_0+1,\sum_{j=1}^J\sum_{k=1}^\infty l_{jk}\right),~~\pi_0 = \frac{e_0+K^{+}-1}{e_0+K^{+}-1+(f_0-\ln w_0)\sum_{j=1}^J\sum_{k=1}^\infty l_{jk}}\notag\\
\gamma_0 &\sim \pi_0 \mbox{Gamma}\left(e_0+K^{+},\frac{1}{f_0-\ln w_0}\right) + (1-\pi_0) \mbox{Gamma}\left(e_0+K^{+}-1,\frac{1}{f_0-\ln w_0}\right)
\end{align}
where $K^+$ is the number of used atoms. Since it is infeasible in practice to let $K\rightarrow \infty$, directly using this method to sample $\gamma_0$ is only approximately correct, which may result in a biased estimate especially if $K$ is not set large enough. Thus in the experiments, $\gamma_0$ is not sampled and is fixed as one. Note that for  implementation convenience, it is also common to fix the concentration parameter $\alpha$ as one [25]. We find through experiments that learning this parameter usually results in obviously lower per-word perplexity for held out words, thus we allow the learning of $\alpha$ using the data augmentation method proposed in [7], which is modified from the one proposed in [24].

\section{NB-LDA}

The NB-LDA model is constructed as
\begin{align}
&x_{ji} \sim F(\omega_{z_{ji}}), ~~\omega_k \sim \mbox{Dir}(\eta,\cdots,\eta)\notag\notag\\
&N_j = \sum_{k=1}^K n_{jk},~~n_{jk}\sim \mbox{Pois}(\lambda_{jk}),~~\lambda_{jk}\sim \mbox{Gamma}(r_j,p_j/(1-p_j))\notag\notag\\
&r_j\sim\mbox{Gamma}(\gamma_0,1/c),~~ p_j\sim\mbox{Beta}(a_0,b_0),~~\gamma_0\sim \mbox{Gamma}(e_0,1/f_0)
\end{align}
Note that letting $r_j\sim\mbox{Gamma}(\gamma_0,1/c),~\gamma_0\sim \mbox{Gamma}(e_0,1/f_0)$ allows different documents to share statistical strength for inferring their NB dispersion parameters.

The block Gibbs sampling can be expressed as
\begin{align}
&\mbox{Pr}(z_{ji}=k|-) \propto F(x_{ji};\omega_{k})\lambda_{jk}\notag\\
&(p_j|-)\sim \mbox{Beta}\left(a_0+N_j,b_0+ Kr_j\right),~~p'_j=\frac{-K\ln(1-p_j)}{c-K\ln(1-p_j)}\notag\\
& (l_{jk}|-) \sim\ \mbox{CRT}(n_{jk},r_j),~~ l'_j \sim\mbox{CRT}(\sum_{k=1}^Kl_{jk},\gamma_0),~~\gamma_0\sim\mbox{Gamma}\left(e_0+\sum_{j=1}^Jl'_j,\frac{1}{f_0-\sum_{j=1}^J\ln(1-p'_j)}\right)\notag\\
&(r_j|-) \sim \mbox{Gamma} \left( \gamma_0+  \sum_{k=1}^K  l_{jk}, \frac{1}{c - K \ln( 1 - p_j)}\right),~~(\lambda_{jk}|-) \sim \mbox{Gamma}(r_j+n_{jk},p_j)\notag\\
&(\omega_k|-)\sim \mbox{Dir}\left(\eta+ \sum_{j=1}^J\sum_{i=1}^{N_j}\delta(z_{ji}=k,v_{ji}=1),  \cdots,\eta+\sum_{j=1}^J\sum_{i=1}^{N_j}\delta(z_{ji}=k,v_{ji}=V)\right).
\end{align}

\section{NB-HDP}

The NB-HDP model is a special case of the Gamma-NB process model with $p_j=0.5$.
The hierarchical model and inference for the Gamma-NB process are shown in (16) and (18) of the main paper, respectively.

\section{NB-FTM}
The NB-FTM model is a special case of zero-inflated NB process with $p_j=0.5$, which is constructed as
\begin{align}
x_{ji} &\sim F(\omega_{z_{ji}}), ~~\omega_k \sim \mbox{Dir}(\eta,\cdots,\eta)\notag\\
N_j &= \sum_{k=1}^K n_{jk},~~ n_{jk}\sim \mbox{Pois}(\lambda_{jk})\notag\\
\lambda_{jk} &\sim \mbox{Gamma}(r_k b_{jk} , 0.5/(1-0.5))\notag\\
r_k&\sim\mbox{Gamma}(\gamma_0, 1/c),~
\gamma_0\sim\mbox{Gamma}(e_0, 1/f_0)\notag\\
b_{jk}&\sim\mbox{Bernoulli}(\pi_k),~~ \pi_k\sim\mbox{Beta}(c/K,c(1-1/K)).
\end{align}
The block Gibbs sampling can be expressed as
\begin{align}
&\mbox{Pr}(z_{ji}=k|-) \propto F(x_{ji};\omega_{k})\lambda_{jk}\notag\\
&b_{jk} \sim \delta(n_{jk}=0)\mbox{Bernoulli}\left( \frac{\pi_k(1-0.5)^{r_k}}{\pi_k(1-0.5)^{r_k} + (1-\pi_k)} \right) + \delta(n_{jk}>0)\notag\\
&\pi_{k} \sim \mbox{Beta}\bigg(c/K + \sum_{j=1}^J b_{jk},c(1-1/K)+J- \sum_{j=1}^J b_{jk}\bigg),~p_k' = \frac{-\sum_{j}b_{jk}\ln(1-0.5)}{c-\sum_{j}b_{jk}\ln(1-0.5)}\notag\\
&(l_{jk}|-) \sim \mbox{CRT}(n_{jk},r_k b_{jk}),~(l'_{k}|-) \sim \mbox{CRT}\left(\sum_{j=1}^Jl_{jk},\gamma_0\right)\notag\\
%&\mbox{Pr}({t}'_{jk} = l|-) = P_{\gamma_0}({t}_{jk},l),~~l=0,\cdots, {t}_{jk}\notag\\
&(\gamma_0|-) \sim \mbox{Gamma} \left( e_0 +  \sum_{k=1}^K l'_{k}, \frac{1}{f_0 - \sum_{k=1}^K\ln( 1 - p_k')}\right)\notag\\
&(r_k|-) \sim \mbox{Gamma} \left( \gamma_0 +  \sum_{j=1}^Jl_{jk}, \frac{1}{c - \sum_{j=1}^J b_{jk} \ln( 1 - 0.5)}\right)\notag\\
&(\lambda_{jk}|-) \sim \mbox{Gamma}(r_k b_{jk} +n_{jk},0.5) \notag\\
&(\omega_k|-)\sim \mbox{Dir}\left(\eta+ \sum_{j=1}^J\sum_{i=1}^{N_j}\delta(z_{ji}=k,v_{ji}=1),  \cdots,\eta+\sum_{j=1}^J\sum_{i=1}^{N_j}\delta(z_{ji}=k,v_{ji}=V)\right).
\end{align}

\section{Beta-NB}
The beta-NB process model is constructed as
\begin{align}
x_{ji} &\sim F(\omega_{z_{ji}}), ~~\omega_k \sim \mbox{Dir}(\eta,\cdots,\eta)\notag\\
N_j &= \sum_{k=1}^K n_{jk},~~n_{jk}\sim \mbox{Pois}(\lambda_{jk}),~~
\lambda_{jk}\sim \mbox{Gamma}(r_j,p_k/(1-p_k))\notag\\
r_j&\sim\mbox{Gamma}(e_0,1/f_0),~~p_k\sim\mbox{Beta}(c/K,c(1-K))
\end{align}
The block Gibbs sampling can be expressed as
\begin{align}
&\mbox{Pr}(z_{ji}=k|-) \propto F(x_{ji};\omega_{k})\lambda_{jk}\notag\\
&(p_k|-)\sim \mbox{Beta}\left(c/K+\sum_{j=1}^Jn_{jk},c(1-1/K)+ \sum_{j=1}^Jr_j\right),
~~l_{jk} \sim \mbox{CRT}(n_{jk},r_j)\notag\\
%~l'_j \sim\mbox{CRT}(\sum_{k=1}^Kl_{jk},\gamma_0),~\gamma_0\sim\mbox{Gamma}\left(e_0+\sum_{j=1}^Jl'_j,\frac{1}{f_0-J\ln(1-p')}\right)\notag\\
&(r_j|-) \sim \hspace{-0.0mm}\mbox{Gamma} \left( e_0+  \sum_{k=1}^K  l_{jk}, \frac{1}{f_0 - \sum_{k=1}^K \ln( 1 - p_k)}\right)\notag\\
&(\lambda_{jk}|-) \sim \mbox{Gamma}(r_j+n_{jk},p_k)\notag\\
&(\omega_k|-)\sim \mbox{Dir}\left(\eta+ \sum_{j=1}^J\sum_{i=1}^{N_j}\delta(z_{ji}=k,v_{ji}=1),  \cdots,\eta+\sum_{j=1}^J\sum_{i=1}^{N_j}\delta(z_{ji}=k,v_{ji}=V)\right).
\end{align}

\section{Marked-Beta-NB}
The Marked-Beta-NB process model is constructed as
\begin{align}
x_{ji} &\sim F(\omega_{z_{ji}}), ~~\omega_k \sim \mbox{Dir}(\eta,\cdots,\eta)\notag\\
N_j &= \sum_{k=1}^K n_{jk},~~n_{jk}\sim \mbox{Pois}(\lambda_{jk}),~~\lambda_{jk}\sim \mbox{Gamma}(r_k,p_k/(1-p_k))\notag\\
r_k&\sim\mbox{Gamma}(e_0,1/f_0),~~ p_k\sim\mbox{Beta}(c/K,c(1-K)) %~~\gamma_0\sim \mbox{Gamma}(e_0,1/f_0)
\end{align}
The block Gibbs sampling can be expressed as
\begin{align}
&\mbox{Pr}(z_{ji}=k|-) \propto F(x_{ji};\omega_{k})\lambda_{jk}\notag\\
&p_k\sim \mbox{Beta}\left(c/K+\sum_{j=1}^Jn_{jk},c(1-1/K)+ Jr_k\right),~~l_{jk} \sim\mbox{CRT}(n_{jk},r_k) \notag\\
&(r_k|-) \sim \mbox{Gamma} \left( e_0+  \sum_{j=1}^J  l_{jk}, \frac{1}{f_0- J \ln( 1 - p_k)}\right)\notag\\
&(\lambda_{jk}|-) \sim \mbox{Gamma}(r_k+n_{jk},p_k)\notag\\
&(\omega_k|-)\sim \mbox{Dir}\left(\eta+ \sum_{j=1}^J\sum_{i=1}^{N_j}\delta(z_{ji}=k,v_{ji}=1),  \cdots,\eta+\sum_{j=1}^J\sum_{i=1}^{N_j}\delta(z_{ji}=k,v_{ji}=V)\right).
\end{align}

\section{Marked-Gamma-NB}
The Marked-Gamma-NB process model is constructed as
\begin{align}
x_{ji} &\sim F(\omega_{z_{ji}}), ~~\omega_k \sim \mbox{Dir}(\eta,\cdots,\eta)\notag\\
N_j &= \sum_{k=1}^K n_{jk},~~n_{jk}\sim \mbox{Pois}(\lambda_{jk}),~\lambda_{jk}\sim \mbox{Gamma}(r_k,p_k/(1-p_k))\notag\\
r_k&\sim\mbox{Gamma}(\gamma_0/K,1/c),~~p_k\sim\mbox{Beta}(a_0,b_0),~~\gamma_0\sim \mbox{Gamma}(e_0,1/f_0).
\end{align}
%With $p'_k=\frac{-\ln(1-p_k)}{c-\ln(1-p_k)}$,
The block Gibbs sampling can be expressed as
\begin{align}
&\mbox{Pr}(z_{ji}=k|-) \propto F(x_{ji};\omega_{k})\lambda_{jk}\notag\\
&p_k\sim \mbox{Beta}\left(a_0+\sum_{j=1}^Jn_{jk},b_0+ Jr_k\right),~~p'_k=\frac{-J\ln(1-p_k)}{c-J\ln(1-p_k)}\notag\\
%&\mbox{Pr}({t}_{jk} = i|-) = R_{r_k}(n_{jk},i),~~i=0,\cdots, n_{jk}\notag\\
%&(l'_{jk}|-) \sim \mbox{CRT}(l_{jk},\gamma_0/K), ~~(\gamma_0|-)  \sim  \mbox{Gamma}  \bigg( e_0  +   \sum_{j}   \sum_{k}l'_{jk}, \frac{1}{f_0  -  \sum_{j}\sum_{k} \ln( 1 - p'_k)/K} \bigg)\notag\\
&l_{jk} \sim\mbox{CRT}(n_{jk},r_k),~l'_k \sim\mbox{CRT}(\sum_{j=1}^Jl_{jk},\gamma_0/K),~\gamma_0\sim\mbox{Gamma}\left(e_0+\sum_{k=1}^Kl'_k,\frac{1}{f_0-\sum_{k=1}^K\ln(1-p'_k)/K}\right)\notag\\
&(r_k|-) \sim \mbox{Gamma} \left( \gamma_0/K+  \sum_{j=1}^J  l_{jk}, \frac{1}{c - J \ln( 1 - p_k)}\right),~~(\lambda_{jk}|-) \sim \mbox{Gamma}(r_k+n_{jk},p_k)\notag\\
&(\omega_k|-)\sim \mbox{Dir}\left(\eta+ \sum_{j=1}^J\sum_{i=1}^{N_j}\delta(z_{ji}=k,v_{ji}=1),  \cdots,\eta+\sum_{j=1}^J\sum_{i=1}^{N_j}\delta(z_{ji}=k,v_{ji}=V)\right).
\end{align}

% The \author macro works with any number of authors. There are two commands
% used to separate the names and addresses of multiple authors: \And and \AND.
%
% Using \And between authors leaves it to \LaTeX{} to determine where to break
% the lines. Using \AND forces a linebreak at that point. So, if \LaTeX{}
% puts 3 of 4 authors names on the first line, and the last on the second
% line, try using \AND instead of \And before the third author name.

\end{document}